\newtheorem{theorem}{Theorem}
\newtheorem{proof}{Proof}
\title{Modeling phonological similarity effects on the self-organization of vocabularies}
\author{{\large \bf Javier Vera (jxvera@gmail.com)} \\
  Facultad de Ingenier\'ia y Ciencias, Universidad Adolfo Ib\'a\~{n}ez\\
  Santiago, Chile}
\begin{document}

\maketitle

\begin{abstract}
This work develops a computational model (by Automata Networks) of phonological similarity effects involved in the formation of word-meaning associations on artificial populations of speakers. Classical studies show that in recalling experiments memory performance was impaired for phonologically similar words versus dissimilar ones. Here, the individuals confound phonologically similar words according to a predefined parameter. The main hypothesis is that there is a critical range of the parameter, and with this, of working-memory mechanisms, which implies drastic changes in the final consensus of the entire population. Theoretical results present proofs of convergence for a particular case of the model within a worst-case complexity framework. Computer simulations describe the evolution of an energy function that measures the amount of local agreement between individuals. The main finding is the appearance of sudden changes in the energy function at critical parameters. 

\textbf{Keywords:} 
Phonological similarity; Working memory; Automata networks; Linguistic conventions.
\end{abstract}

\section{Introduction}

In natural language a typical linguistic interaction is not a simple sequence of individual actions, but also a form of \textit{joint activity} that involves cooperation and coordination between participants \cite{tomasello2008}. What is more, inside the dialogue language users tend to converge in their choice of constructions. This mutual convergence process, or \textit{alignment}, has been extensively studied within computational studies of the formation of language through the Naming Game \cite{Steels95,steels2011REVIEW,baronchelli_naming_jstat,1742-5468-2011-04-P04006}. It attempts to ask how on a population of agents only from local interactions it arises a shared word-meaning association (the simplest version of a \textit{vocabulary}). This model considers a finite population of agents, where each one is endowed with a memory in which it stores an in principle unlimited number of words. At each discrete time step a pair of agents is selected: one plays the role of hearer, one plays the role of hearer. First, the speaker refers to an object by using a word. Next, the hearer tries to identify the referent. For this purpose, the hearer inspects its own memory: (1) if the word belongs to the memory of the hearer, both speaker and hearer cancel all the words in their memories, except such word; or (2) the hearer adds the word to its memory, if the word does no belong to its memory. A pairwise interaction such as in the Naming Game only is possible if the interaction is a kind of joint activity. In a more realistic scenario both agents interact in a context where the object is located and share focus on the object by means of pointing or eye-gazing. The interaction continues until both agents reach a common word associated to the object, a linguistic convention. 

Here, it is assumed that the development of linguistic conventions is founded on self-organization mechanisms arising only from \textit{local} interactions between agents \cite{Steels95,Steels96,baronchelli_naming_jstat}. Given this self-organized nature Automata Networks (AN) \cite{Neumann:66,wolfram02} provide the adequate framework to explore alignment from a computational (and mathematical) point of view. AN are extremely simple models where each vertex of a network evolves following a local rule based on the states of ``nearby" vertices. Despite of the simplicity of the defining rules, AN exhibit astonishing rich patterns of behavior. A \textit{linguistic convention} can be considered as a complex pattern.

What is the relationship between cognitive mechanisms of the individuals and the emergence of language on artificial populations? The question entails the study of computational machines which exhibit limitations and constraints of \textit{human} language users. This work stresses a novel question related to models of the self-organization of language: To what extent do working memory constraints influence the alignment of shared conventions on artificial populations of agents? Language users (and therefore agents playing computational games of the formation of language) suffer limited working memory constraints \cite{67167d47d9924cbb86a53b6d61491ed4,baddeley2007working}. Particularly, \textit{phonological similarity} effects suppose that individuals confound word items sharing large portions of phonological content. A classical work \cite{doi:10.1080/14640746608400055} reports experiments where subjects heard sequences of unrelated words and tried to recall in the correct order. The results suggest that memory performance was impaired for phonologically similar words (man, cad, mat, cap, can) versus dissimilar ones (pen, sup, cow, day, hot). This effect is a strong evidence for the existence of the \textit{phonological loop}, understood as part of the short-term working memory system \cite{67167d47d9924cbb86a53b6d61491ed4,baddeley2007working}. 

This paper does not attempt to develop a ``realistic" model, but rather an abstract symbolic approach that extracts the essential elements of the problem. The simulations described here are based on a parameter that measures the amount of phonological confusion between words (considered as a way to describe the influence of working memory limits). The work explores the hypothesis of there being a critical range of the parameter that implies drastic changes in the shared conventions of the entire population. Therefore, the features of language (in particular, the consensus on linguistic conventions) emerge abruptly at some critical range of phonological confusion  \cite{hauser1996evolution}. This hypothesis is strongly related to previous work on on the absence of stages in the formation and evolution of human languages \cite{bickerton98,calvin2001lingua,FS03}. 

The work is organized as follows. Section ``Model" explains basic notions on AN, the instrumentalization of phonological similarity and the rules of interaction. Section ``Theoretical results" reports simple mathematical results related to the convergence of particular cases of the model. The next section (``Simulations") describes simulation tasks based on an energy operator, over a parameter that measures the amount of similarity confusion between words. A brief discussion of the results is presented in the final section. 

\section{Model}

\subsection{Elements of the AN model}
 
Roughly speaking the AN model involves the following elements: 

\begin{enumerate}
\item A regular grid graph: vertices represent individuals; edges represent possible communicative interactions. More generally, as in the section ``Theoretical results", the graph can be a connected, undirected and simple network.

\item Each individual is associated to a \textit{state} that eventually changes along time. This \textit{state} is a way to represent the language of the individual at some time frame. 

\item A set of \textit{local rules} that define how the system changes. The local rule associated to one individual considers as inputs the states of the nearby individuals (the \textit{neighbors}).

\item A function, the \textit{updating scheme}, that indicates the order in which the individuals are updated. Two updating schemes are considered in this work: (1) the \textit{fully-asynchronous} scheme, where at each time step one individual is choosen uniformly at random; and (2) the \textit{sequential} scheme, defined as a permutation of the set of vertices. 
\end{enumerate}

In what follows some of the previous elements will be treated in greater depth. 

\subsection{Basic notions}

The set $P=\{1,...,n\}$ represents the population of individuals, located on the vertices of the regular grid graph $G=(P,E)$, where $E$ is the set of edges. The vertex $u \in P$ only interacts (or ``talks") with the set of adjacent vertices $V_u=\{v \in P: (u,v) \in E \}$ (the \textit{neighborhood}). Each individual considers four neighbors: up, down, left and right ones (\textit{Von Neumann} neighborhood). The individual $u \in P$ is endowed with a \textit{memory} set $M_u$ in which it stores words belonging to a finite set $W$, with $p$ elements. 

Let $\Sigma=\{a_1,...,a_s\}$ be a set of sounds. Each word of $W$ is constructed by a combination of sounds taken from $\Sigma$. For instance, $a_3a_1a_8 \in W$. The \textit{length} of the word $x$ is its number of sounds. For the sake of simplicity, all the words have the same length $L$. $x(k)$, with $k \leqslant L$, denotes the $k-$th sound (or position) of the word $x$. To explicitly measure the amount of phonological similarity between words the \textit{Hamming distance} $H(x,y)$ between the words $x$ and $y$ is defined as the number of positions in which they differ. Consider two words $a_4a_6a_5$ and $a_7a_6a_3$, then $H(a_4a_6a_5,a_7a_6a_3)=2$.

\subsection{Confusion parameter}

To explicitly measure the ability to distinguish between words the \textit{confusion parameter} $\epsilon \in [0,1]$ is defined. Suppose that the vertex $u$ faces two words $x,y$. Then,

\begin{enumerate}
\item[] \textbf{if} $H(x,y) > \epsilon L$, $u$ \textbf{distinguishes} the words $x$ and $y$

\item[] \textbf{else} $u$ \textbf{confounds} the words $x$ and $y$ (or simply ``$x=y$")
\end{enumerate} 

For instance, the individual $u \in P$ is confronted with the words $x=a_1a_8a_6a_4$ and $y=a_1a_8a_6a_3$. Two values of $\epsilon$ are considered, $0$ and $0.5$: 

\begin{itemize}
\item ($\epsilon=0$) $H(x,y) = 1 > \epsilon L = 0 \times 4 = 0$, then $u$ distinguishes the words $x$ and $y$

\item ($\epsilon=0.5$) $H(x,y) = 1 < \epsilon L = 0.5 \times 4=2$, then $u$ confounds the words $x$ and $y$ 
\end{itemize}

\subsection{Local rules}

Inspired in the Naming Game \cite{baronchelli_naming_jstat}, the \textit{local rule} associated to the individual $u \in P$ is based on two possible actions on the memory $M_u$: 

\begin{itemize}
\item $u$ updates its memory $M_u$ by the \textbf{addition} of words; or

\item $u$ \textbf{collapses} its memory if $M_u$ is updated by cancelling all its words, except one of them.
\end{itemize}

Both actions attempt to take into account \textit{lateral inhibition}  strategies \cite{Steels95,steels2011REVIEW} in the alignment process: the individuals add words in order to increase the chance of future agreements (local consensus), and defect the words that do not cooperate with mutual understanding. In a \textit{collapse} the individuals prefer the \textit{minimal} word, according to the \textit{lexicographic order} over the set of words. The \textit{lexicographic order}, denoted $\prec$, is a generalization of the typical alphabetical order of words on the alphabetical order of their component letters (or sounds). For example in the dictionary the word ``Me" appears before ``My" because the letter \textit{e} comes before the letter \textit{y} in the alphabet. In some sense the word ``Me" is \textit{lower} than the word ``My". Formally, the order $<$ is defined on the set $\Sigma$. Two words $x$ and $y$ of length $L$ are considered. Then, $x\prec y$ ($x$ is \textit{lexicographically} lower than $y$) if the first position in which they differ, say $k \leqslant  L$, satisfies $x(k)<y(k)$. For instance given the set of sounds $\Sigma=\{a,b,c,d\}$, with $a<b<c<d$, the words $abc$, $bcd$ and $cda$ satisfy $abc<bcd<cda$. Therefore, $abc$ is the \textit{minimal} word or, in other terms, $\min(\{abc,bcd,cda\})=abc$. Associated to the previous words ``Me" and ``My" it is possible to write $\min(\{\text{``Me"},\text{``My"}\})=\text{``Me"}$. 

The preference for the minimal words can be viewed in accordance with the following hypothetical scenario \cite{Nowak06071999}. It is possible to think in a population of early hominids for which leopards represents a higher risk than cows. So, the word ``leopard" may be more valuable than ``cow". In the terms of this paper ``leopard" can be the \textit{minimal} word.  

At time step $t$ the vertex $u \in P$ is selected according to the updating scheme (\textit{fully asynchronous} or \textit{sequential}). Consider a simple population $P=\{1,2,3,4,5\}$ (each number represents one individual). For a \textit{fully asynchronous} scheme at each time step any individual of $P$ can be selected (for instance, the individual 3). In the next step the individual 3 can be selected again. For a \textit{sequential} scheme a permutation of the set $P$ is defined, for instance, the order 5-4-3-2-1. The individual 5 updates first, then the individual 4 updates, taking into account the effects of the changes in the first individual, and so on. At time step 6 the dynamics starts in the same previous way. In other terms a \textit{sequential} scheme supposes that each individual is updated after that all the other individuals of the population have been updated.

The individual $u \in P$ is completely characterized by its \textit{state} $(M_u,x_u)$, where $M_u$ is the memory to stores words ($M_u$ is a subset of $W$) and $x_u \in M_u$ is a word that $u$ conveys to the vertices of $V_u$. The model induces specific communication roles: the vertex $u$ plays the role of ``hearer" (it receives the words conveyed by its neighbors); the neighbors of $u$ play the role of ``speaker" (they convey words to the vertex $u$). The set of all words conveyed by the speakers can be re-written as two subsets: $N_u$ and $B_u$. Roughly speaking $N_u$  includes the unknown words, and $B_u$ includes the known ones. 

The state pair $(M_u,x_u)$ changes according to the following steps, which define the \textit{local rule} of the automata (see Fig. 1):

\begin{enumerate}
\item[] \textbf{step 1} the vertex $u$ defines two sets:

					\begin{enumerate}
					\item[] $N_u=\{x_v: (v \in V_u) \land (\forall y \in M_u, H(x_v,y) > \epsilon L) \}$
					\item[] $B_u=\{x_v: (v \in V_u) \land (\forall y \in M_u, H(x_v,y) \leqslant \epsilon L) \}$
					\end{enumerate}

\item[] \textbf{step 2} 

				\begin{enumerate}
				\item[] \textbf{if} $N_u \neq \emptyset$, $M_u$ \textbf{adds} the words of $N_u$
				\item[] \textbf{else} $M_u$ \textbf{collapses} in the word $\bar{x}$, selected at random from the set $\{x \in B_u: H(x,\min(B_u)) \leqslant \epsilon L\}$ (that is, the new state is $(\{\bar{x}\},\bar{x})$)
				\end{enumerate}

\end{enumerate}

Step 1 comprises (1) the speaker's behavior (the neighbors convey words to the vertex $u$); and (2) the definition by the hearer of the sets $N_u$ and $B_u$. Given a conveyed word $x_v$, $v \in V_u$, the hearer $u$ decides between: either $x_v$ is added to $N_u$ if for all $y \in M_u, H(x_v,y) > \epsilon L$; or $x_v$ is added to $B_u$, otherwise. 

Step 2 summarizes the behavior of the hearer in order to \textit{align} itself with the speakers. In the case that $N_u \neq \emptyset$, the hearer simply adds to its memory the words of $N_u$. Otherwise ($N_u = \emptyset$), the hearer collapses its memory in the word $\bar{x} \in B_u$. The word $\bar{x}$ is selected uniformly at random from $\{x \in B_u: H(x,\min(B_u)) \leqslant \epsilon L\}$. Thus, the preferred word $\min(B_u)$ can be confused by ``similar" ones (with $H(\cdot,\min(B_u)) \leqslant \epsilon L$).

\begin{figure*}
\begin{center}
\begin{tikzpicture}
  [scale=.45,auto=left,every node/.style={minimum size=0cm}] 
  [thick,scale=0.6,shorten >=0pt]
 
  \node (n1) at (3,5)  {\Large $(\{abcd,bacd\},bacd)$}; 
   \node (n2) at (-0.5,7.5)  {\Large $bacd$};
   \node (n3) at (6.5,7.5)  {\Large  $cabd$};
   \node (n4) at (3,2) {\Large $dabc$};

  \foreach \from/\to in {n1/n2,n1/n3,n1/n4}
    \draw (\from) -- (\to);
    
    
   \node (n5) at (19,5)  {\Large $(\{abcd,bacd,cabd,dabc\},bacd)$}; 
   \node (n6) at (15.5,7.5)  {};
   \node (n7) at (22.5,7.5)  {};
   \node (n8) at (19,2) {};

  \foreach \from/\to in {n5/n6,n5/n7,n5/n8}
    \draw (\from) -- (\to);
    
    
   \node (n9) at (9,5)  {};
   \node (n10) at (12,5)  {};
   
   \path (n9) edge [->, line width=2pt] node[]{\Large \textbf{(addition)}} (n10);
   
   
   \node (n11) at (3,-3)  {\Large $(\{abcd,bacd,cabd\},bacd)$}; 
   \node (n12) at (-0.5,-0.5)  {\Large $bacd$};
   \node (n13) at (6.5,-0.5)  {\Large $cabd$};
   \node (n14) at (3,-6) {\Large $abcd$};

  \foreach \from/\to in {n11/n12,n11/n13,n11/n14}
    \draw (\from) -- (\to);
    
    
   \node (n15) at (19,-3)  {\Large $(\{abcd\},abcd)$}; 
   \node (n16) at (15.5,-0.5)  {};
   \node (n17) at (22.5,-0.5)  {};
   \node (n18) at (19,-6) {};

  \foreach \from/\to in {n15/n16,n15/n17,n15/n18}
    \draw (\from) -- (\to);
    
    
   \node (n19) at (9,-3)  {};
   \node (n20) at (12,-3)  {};
   
   \path (n19) edge [->, line width=2pt] node[]{\Large \textbf{(collapse)}} (n20);

\end{tikzpicture}
\end{center}
\caption{\textbf{Example of the two actions at $\epsilon=0$}. The order $a<b<c<d$ is defined on the set $\Sigma=\{a,b,c,d\}$. $W=\{abcd,bacd,cabd,dabc\}$. The \textit{lexicographic} order establishes $abcd \prec bacd \prec cabd \prec dabc$. The confusion parameter is set to $\epsilon=0$ (that is, the individuals do not confound words). Suppose that at some time step the central vertex ($u \in P$) has been choosen. Four individuals participate in the interaction: the central vertex $u$ and its three neighbors of $V_u$. Associated to each action of the local rule, two different configurations are showed. First row \textbf{(addition)}: $B_u =\{bacd\}$ and $N_u=\{cabd,dabc\}$. Second row \textbf{(collapse)}: $B_u=\{abcd,bacd,cabd\}$ and $N_u=\emptyset$.}
\end{figure*}
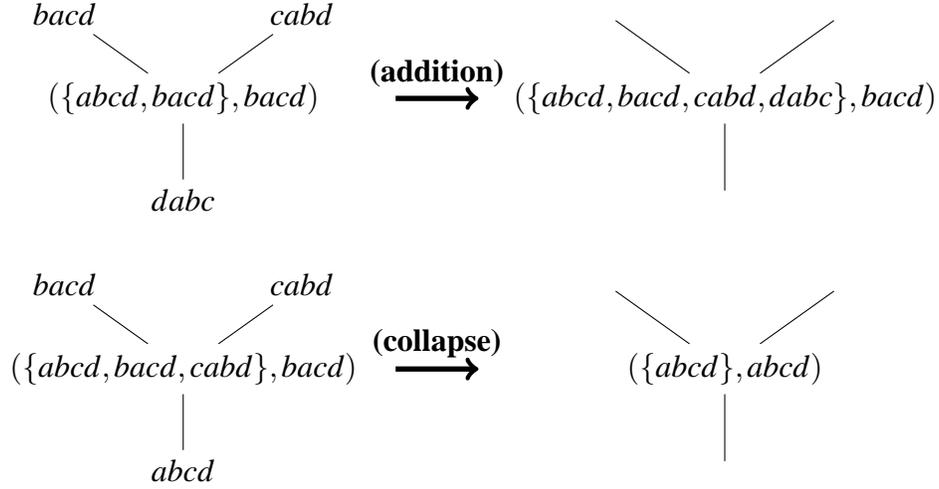

\subsection{Dynamics of the automata}

As initial configuration each individual receives a word constructed by a random combination of $L$ sounds from the set of symbols $\Sigma$. Thus, at $t=0$ each individual is associated to a state of the form $(\{x\},x)$, with $x \in W$. Each discrete time step $t\geqslant 0$ supposes that one individual, say $u$, is selected uniformly at random (the \textit{fully-asynchronous} scheme) or according to a permutation of the set of vertices (the \textit{sequential} scheme). The individual receives the words conveyed by its neighbors: it plays the role of hearer, the neighbors play the role of speaker. Regarding to the conveyed words the individual follows the two defined steps in order to decide possible changes in its own language state:

\begin{itemize}
\item[] \textbf{step 1} the individual defines the sets $N_u$ (unknown words) and $B_u$ (known words). 

\item[] \textbf{step 2} the individual either \textbf{adds} words if $N_u$ is non empty or \textbf{collapses} its memory in the minimal word of $B_u$, if $N_u$ is empty.
\end{itemize}

Both steps involve the possibility of confusion between \textit{similar} words. In the next time step $t+1$ another individual (possibly the same one) is selected, and so on. 

A \textit{fixed point} is a configuration invariant under the application of local rules, which can be interpreted as a final consensus configuration, where all individuals agree about some linguistic convention. 

\section{Theoretical results}

An interesting problem related to the formation of consensus on linguistic conventions is to propose a proof of convergence. Given the mathematical framework of this paper the problem becomes to count (in the \textit{worst} case) the number of simulation steps whom the dynamics needs to stop, that is, to prove that after a finite number of time steps the population reaches a shared linguistic convention. Despite that other works have solved related tasks \cite{DevJTB06} the novelty of the rest of this section is to develop a convergence proof based on the worst-case complexity, which measures the amount of resources (running time) needed by the system if it is considered as an \textit{algorithm}. Running time, defined as the number of steps until the entire population reaches a global consensus language, indicates the largest dynamics performed by the automata given the size $n$ of the population (denoted $\mathcal{O}(f(n))$, where $f(n)$ is a function of $n$). For instance $\mathcal{O}(n^2)$ means that in the worst-case the running time has a growth rate scaling as $n^2$.

In this section individuals are located on a general undirected and connected network (not necessarily a regular \textit{grid}), and they do not confound words, that is, $\epsilon=0$.

It is straightforward to notice that at $\epsilon=0$ $(\forall y \in M_u, H(x_v,y) > \epsilon L)$ is equivalent to $(x_v \notin M_u)$. Roughly speaking, the expression $(\forall y \in M_u, H(x_v,y) > \epsilon L)$ means that $x_v$ is ``different" to every word in $M_u$ and, therefore, $x_v \notin M_u$. As a consequence the two steps of the rule take a simpler form:

\begin{enumerate}
\item[] \textbf{step 1} the vertex $u$ defines two sets:

					\begin{enumerate}
					\item[] $N_u=\{x_v: (v \in V_u) \land (x_v \notin M_u) \}$
					\item[] $B_u=\{x_v: (v \in V_u) \land (x_v \in M_u) \}$
					\end{enumerate}

\item[] \textbf{step 2} 

				\begin{enumerate}
				\item[] \textbf{if} $N_u \neq \emptyset$, $M_u$ \textbf{adds} the words of $N_u$
				\item[] \textbf{else} $M_u$ \textbf{collapses} in the word $\min(B_u)$ (the new state is $(\{\min(B_u)\},\min(B_u))$)
				\end{enumerate}

\end{enumerate}

First of all, a sequential updating scheme is considered. This means that a \textit{permutation} of the vertices is defined. As previously noted, each individual is updated after that all the other individuals of the population have been updated. In the worst case, each time step supposes that one individual adds one word. This process ends when some individual has all possible words, that is, after $p-1$ steps (there are $p$ words). Since the population has size $n$, after $n(p-1)$ steps the individuals must collapse their memories. Then, at step $t^*=n(p-1)$ all individuals have been collapsed at least once. As the theorem shows in detail the minimum conveyed word at $t^*$ propagates in the system until it reaches a fixed point where all individuals convey this minimum word. 

\begin{theorem}

Consider a population of $n$ individuals playing the automata model with the set of $p$ words $W$ and confusion parameter $\epsilon=0$. Then,
\begin{enumerate}[(I)] 

\item for the sequential scheme, the system converges to fixed points in at most $\mathcal{O}(n^2p)$ steps;

\item for the fully-asynchronous scheme, the system converges to a fixed point in expected time $\mathcal{O}(n^2p\log(n))$.

\end{enumerate}
\end{theorem}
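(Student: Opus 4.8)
The plan is to control the dynamics through the \emph{minimal conveyed word} $m(t):=\min_{u\in P}x_u(t)$ and to show that the word to which $m(t)$ eventually settles spreads across the (connected) network in a bounded number of updating sweeps; part~(II) then follows from part~(I) by replacing each sequential sweep with the random time needed to update every vertex at least once.

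I would begin with two elementary observations. First, a configuration is a \emph{fixed point} iff it is a global consensus on a single word: invariance forces $N_u=\emptyset$ for every $u$ (otherwise words are added) and forces the collapse to change nothing, which at $\epsilon=0$ means $M_u=\{x_u\}$ and $x_v=x_u$ for every neighbour $v$; connectedness then propagates one common state $(\{w\},w)$ to all of $P$. Second, $m(t)$ is non-decreasing: an addition leaves every $x_u$ unchanged, while a collapse resets $x_u$ to $\min(B_u)\ge\min_{v\in V_u}x_v\ge m(t)$. Since $m(t)$ is always the value of some word still present in the system and the set of present words can only shrink (additions move words, collapses delete them), $m(t)$ increases at most $\min(n,p)-1$ times. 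I call a maximal time interval on which $m$ is constant a \emph{phase}, write $\bar m$ for the final value of $m$, and note there are at most $\min(n,p)$ phases.

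The heart of the proof is a spreading argument for $\bar m$. I would first show that once two adjacent vertices both convey the current global minimum, that edge is permanent: whenever one endpoint is updated the other still conveys $m$, so the updated endpoint either adds words (keeping $x_u$) or collapses onto $\min(B_u)=m$, hence never leaves. Therefore the set $D$ of vertices sitting on such frozen edges can only grow, and any neighbour $w$ of $D$ must collapse within $p$ of its own updates (a vertex makes at most $p-1$ additions before $M_w=W$ forces a collapse), at which point $\min(B_w)=\bar m$ and $w$ joins $D$; so $D$ absorbs a whole BFS layer every $p$ sweeps and, once nonempty, engulfs $P$ within $O(np)$ sweeps, after one more sweep giving the pure-singleton fixed point. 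It remains to bound when $D$ first becomes nonempty, and here I would use that any phase during which the set $S_m(t)=\{u:x_u(t)=m\}$ never contains an edge is self-limiting: in such a phase no collapse can output $m$ (that would need a neighbour already conveying $m$, i.e.\ an edge in $S_m$), so $S_m$ only loses members, and since each member collapses out within $p$ of its updates, $S_m$ empties within $p$ sweeps, forcing $m$ to increase. Hence every non-final phase lasts at most $p$ sweeps, and the final phase must create a frozen edge within $p$ sweeps of its start; summing over the $\le\min(n,p)$ phases and adding the $O(np)$ sweeps for the final spreading gives a total of $O(np)$ sweeps.

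For part~(I) a sweep is exactly one pass of the permutation, i.e.\ $n$ steps, so the bound is $O(n^2p)$ steps. For part~(II), I would define the $k$-th sweep of the fully-asynchronous process as the stretch of steps in which every vertex gets selected at least once more; every lemma above used only this ``everyone updated'' property of a sweep, so the number of sweeps to reach a fixed point is still $O(np)$ for every schedule, while by the coupon-collector bound each sweep has expected length $nH_n=O(n\log n)$, whence linearity of expectation yields expected running time $O(np)\cdot O(n\log n)=O(n^2p\log n)$. The main obstacle is the phase bookkeeping — proving that the running minimum cannot keep climbing for long (the self-limiting-phase lemma) and stitching the per-phase estimates together; the permanence-of-a-frozen-edge lemma and the asynchronous coupon-collector transfer are comparatively routine.
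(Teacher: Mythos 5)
Your proposal is correct and reaches the stated bounds, but it replaces the paper's key step with a different (and more robust) lemma, so it is worth comparing. The shared skeleton is identical: once two adjacent vertices both convey the running minimum word that edge is frozen, the frozen set then absorbs one graph-distance layer per $O(p)$ rounds because every vertex must collapse within $p$ of its own updates, the diameter $\mathcal{O}(n)$ gives $\mathcal{O}(np)$ rounds, i.e.\ $\mathcal{O}(n^2p)$ steps, and part (II) follows by pricing each round at the coupon-collector cost $\mathcal{O}(n\log n)$. Where you diverge is in producing the first frozen edge. The paper argues that after $t^*=n(p-1)$ steps every vertex has collapsed at least once and then asserts that the vertex $u$ carrying the global minimum $m$ at time $t^*$ ``must have another neighbor $v$ conveying $m$'' --- a delicate claim, since the neighbor that supplied $m$ to $u$ at the moment of $u$'s last collapse may itself have collapsed onto a different word before $t^*$. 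You instead observe that the running minimum $m(t)$ is non-decreasing and takes at most $\min(n,p)$ values, partition time into phases of constant $m$, and show each edge-free phase is self-limiting (no collapse can output $m$ without creating an edge in $S_m$, so $S_m$ drains within $p$ sweeps and $m$ must climb); summing over phases still costs only $\mathcal{O}(np)$ sweeps. This phase bookkeeping is the genuinely new ingredient: it buys a fully rigorous account of when the frozen edge appears at no asymptotic cost, and it transfers verbatim to the asynchronous setting because every lemma only uses the ``each vertex updated at least once per round'' property. Your handling of (II) --- a deterministic $\mathcal{O}(np)$ bound on the number of rounds combined with linearity of expectation over rounds of expected length $nH_n$ --- is also a slightly more careful justification of the multiplication the paper performs in one line.
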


\begin{proof}
\normalfont
(I) Initially there are $p$ words. Then, in at most $p-1$ updates a vertex has collapsed for the first time (in the worst case the vertex must add every possible word, one at a time). This implies that in $n(p-1)$ steps ($p-1$ updates of each vertex) all vertices have collapsed at least one time. Let $m$ be the minimum conveyed word at step $t^* = n(p-1)$, and let $u$ be a vertex such that $x_u = m$ (in more precise terms, $m=\min(\{x_u\}_{u \in P})$). 

Since $u$ has collapsed at least one time, the updating scheme is sequential, and $m$ is the minimum word, then $u$ must have another neighbor $v \in V_u$ conveying $m$. In consequence, after $t^*$ both vertices $u$ and $v$ will remain conveying $m$, and at each time a neighbor of any of these two vertices will necessarily collapse in the word $m$. The graph has a diameter of $\mathcal{O}(n)$ and each $np$ steps it occurs a collapse. Therefore, in at most $\mathcal{O}(n^2p)$ steps the system converges to a fixed point where all vertices convey the same word $m$.

(II) In the fully-asynchronous scheme, at each time step a single vertex is picked independently and uniformly at random. The expected convergence time grows by a factor of $O(\log(n))$ with respect to the sequential scheme. Notice that in the proof of the part (I) it is shown that after updating [$\mathcal{O}(np)$ times] [each vertex], a fixed point is reached. From the well known coupon collector's problem (see, for instance, \cite{grimmett2001probability}), the expected number of steps required to update [at least once][every vertex in the graph (or pick every coupon)] is $\mathcal{O}(n\log(n))$. Since in a fully asynchronous updating scheme each step is independent to the others, the result follows. 
\end{proof}

\section{Simulations}

\subsection{Protocol}

To describe the amount of agreement between individuals, an energy operator is defined. This energy-based approach arises from a ``physicist" interpretation, related to \cite{Regnault20094844}: the energy measures the amount of local unstability of the system. At each neighborhood $V_u$ the function $\sum_{v \in V_u} H(x_u,x_v)$ is defined, which measures the \textit{Hamming} distance between the word $x_u$ and the words conveyed by the neighbors of the individual $u$. This function is bounded by $0$ (in case of \textit{agreement}, that is, the individual $u$ and its neighbors convey the same word) and $4L$ (\textit{disagreement}, which means that the individual $u$ and its neighbors convey radically different words). Summing over all individuals defines the total energy function at some time step $t$:

\begin{equation}
E(t)=\frac{1}{4Ln}\sum_{u \in P}\sum_{v \in V_u} H(x_u,x_v)
\end{equation}

The function $E(t)$ is bounded: $0 \leqslant E(t) \leqslant 1$. Thus, the dynamics of the AN model can be understood as the trajectory between initial configurations associated to large amounts of unstability (with $E(0) \sim 1$) and final consensus configurations where $E(t) \sim 0$, or equivalently, all individuals convey similar -in the sense defined by the \textit{Hamming} distance- words.

The analysis focuses on two-dimensional periodic lattices of size $n=128^2$ with Von Neumann neighborhood. The simulations describe $500n$ steps of the energy function $E(t)$. At each time step one vertex (playing the role of hearer) is selected uniformly at random (\textit{fully-asynchronous} scheme). The plots show average values over 20 initial conditions. An initial condition is defined as follows: each individual receives uniformly at random a word constructed by a random combination of $L$ sounds from the set of symbols $\Sigma$, $|\Sigma|=10$. Word-length $L$ varies from: $\{2,4,8,16,32,64\}$. $\epsilon$ is varied from 0 to 1 with an increment of $10\%$. 

Given the worst-case complexity approach to theoretical aspects of convergence it is important to notice that on low-dimensional lattices it seems hard that one individual adds $\mathcal{O}(np)$ words after it collapses, because lattices are low-connected. This intuition suggests that running times on lattices will be lower than theoretical bounds ($\mathcal{O}(n^2p\log(n))$ for the \textit{fully-asynchronous} scheme).

\subsection{Results}

There are several remarkable aspects, as shown in Fig. 2 and Fig. 3. First of all, $E(t=500n)$ versus $\epsilon$ exhibits at $\epsilon=1$ a maximum which is close to 0.5 for all values of $L$ (Fig. 2). Secondly, to the extent $L$ grows, $E(t=500n)$ versus $\epsilon$ evolves more ``smoothly": $L \leqslant 8$ supposes ``ladder" steps which mean that different values of the confusion parameter $\epsilon$ lead to the same energy. Finally, focusing the description on $L=32$ (Fig. 3), it is interesting to notice that
at $\epsilon = 0.7$ the average value of $E(t)$ stops approximately after $t=200n$ steps. This fact exhibits that only a few set of runs does not converge to the global minimum of the $E(t)=0$. This strongly suggests the appearance of a critical parameter $\epsilon^* \sim 0.7$ which clearly defines two phases in the evolution of $E(t=500n)$ versus $\epsilon$: (1) $\epsilon < \epsilon^*$ implies the convergence to the global minimum $E(t)=0$, where all individuals convey the same word; and (2) for $\epsilon \geqslant \epsilon^*$ the dynamics changes drastically until it reaches local minima of the energy function ($E(t) \rightarrow 0.5$).

\begin{figure}[h!]

  \begin{center} 
    
  \includegraphics[scale=0.3]{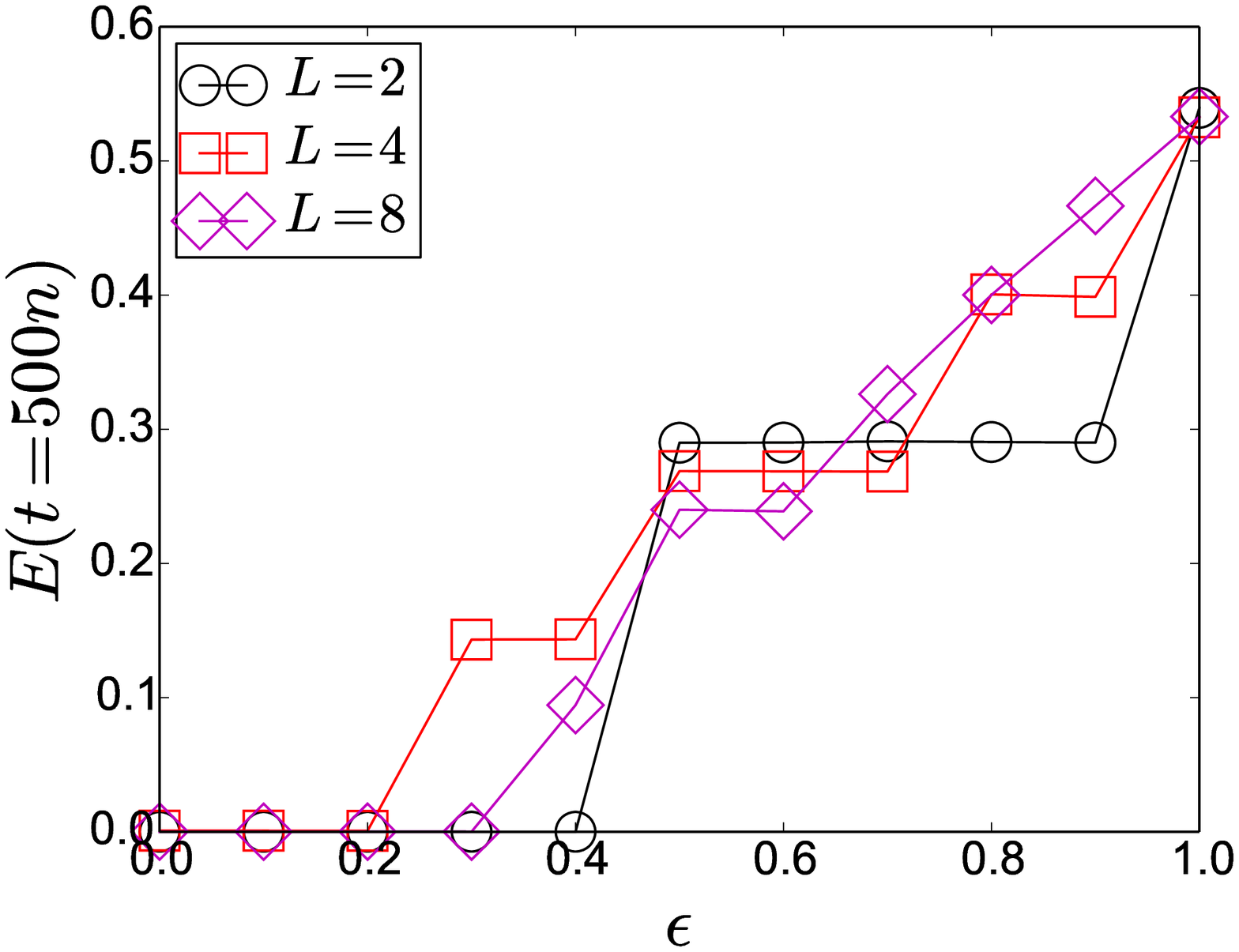}
  \includegraphics[scale=0.3]{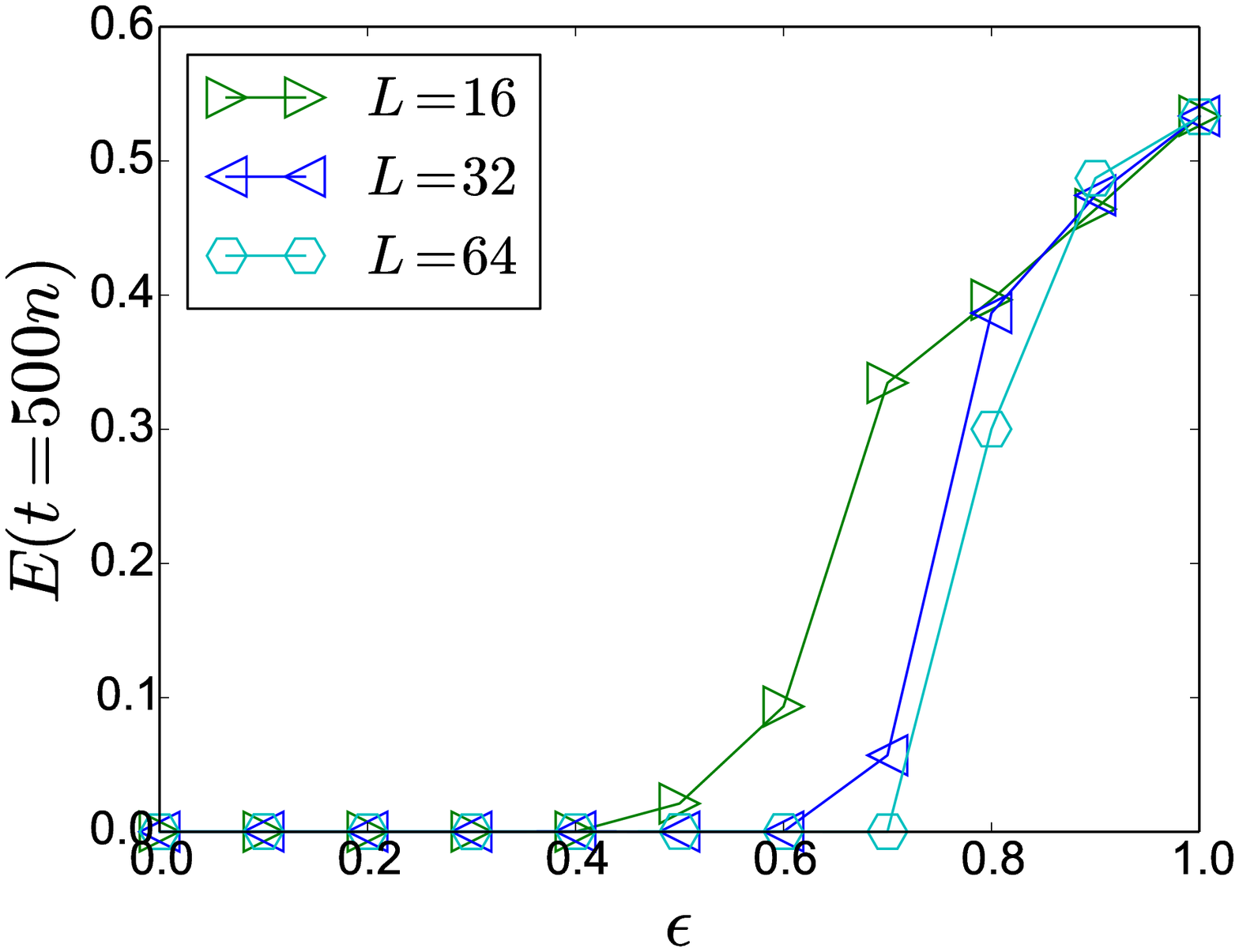}

  \end{center}
  \caption{\textbf{$E(t=500n)$ versus $\epsilon$}. On a two dimensional grid of size $n=128^2$, the figure shows the final value of the energy function versus the parameter $\epsilon$, after $500n$ steps or until they reach the global minimum $E(t)=0$. The plots show averages over 20 initial conditions: for an initial condition, each vertex receives a word constructed by a random combination of $L$ sounds from the set of symbols $\Sigma$, $|\Sigma|=10$. Word-length varies from: $\{2,4,8\}$ (top); and $\{16,32,64\}$ (bottom). $\epsilon$ is varied from 0 to 1 with an increment of $10\%$.}

\end{figure}

\begin{figure}[h!]

  \begin{center} 
    
  \includegraphics[scale=0.3]{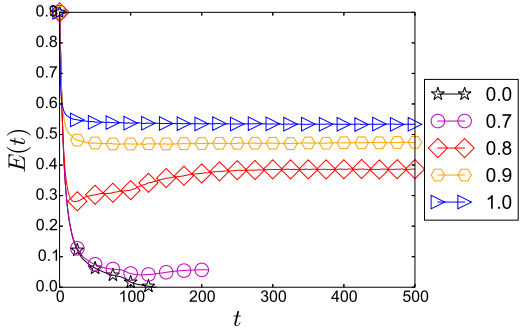}
  
  \end{center}
  \caption{\textbf{$E(t)$ versus $t$, $L=32$, for different values of $\epsilon$}. On a two dimensional grid of size $n=128^2$, the figure shows evolution of the energy function $E(t)$ versus $t$, for different values of the parameter $\epsilon$. Simulations run $500n$ steps or until it reaches the global minimum $E(t)=0$. The plot shows averages over 20 initial conditions: for an initial condition, each vertex receives a word constructed by a random combination of $L$ sounds from the set of symbols $\Sigma$, $|\Sigma|=10$. $L = 32$ and $\epsilon$ is varied from $\{0,0.7,0.8,0.9,1\}$. }

\end{figure}
 
\section{Conclusion}

This work summarizes an AN approach to the formation of linguistic conventions under phonological similarity (and, in general, working memory) mechanisms. The paper presents the evolution of an energy functional, defined as a word ``confusion" average, during the alignment game. Two aspects are remarkable. On the one hand, the appearance of drastic transitions can be related to previous works that focus on the absence of stages in the formation and evolution of human languages (see, for instance, \cite{bickerton98,calvin2001lingua,FS03}). On the other hand, the proposed model becomes an alternative (mathematical) framework for agent-based studies on language formation.

As a first approach to the convergence of the formation of linguistic conventions this paper presents within an AN account simple results of the number of steps needed to reach fixed points. As the main tool the proofs are based on the \textit{worst} case of convergence. 

Many extensions of the proposed model should be studied with the purpose to describe the role of cognitive constraints (and, in particular, the role of (working) memory limits) on the formation of linguistic conventions. First, within a mathematical point of view it seems interesting to explore convergence bounds on regular lattices. Also, a comparison between theoretical and numerical convergence times should be carried out. Second, AN allow to study new aspects of the formation of linguistic conventions. Indeed, the model can be extended to other \textit{updating schemes}, for example, the \textit{synchronous} one, where at each time step all individuals are updated. Third, the results should be compared with larger word lengths ($L$) and several number of symbols ($\Sigma$). Finally, future work should involve more realistic ways to measure the amount of phonological confusion and its effects on the formation of conventions. 

\section*{Acknowledgments}
The author likes to thank CONICYT-Chile under the Doctoral scholarship 21140288.

\bibliographystyle{apacite}

\setlength{\bibleftmargin}{.125in}
\setlength{\bibindent}{-\bibleftmargin}

\bibliography{Vera_04_06_16}

\end{document}